\documentclass{article} 
\usepackage{iclr2027_conference,times}

\usepackage{hyperref}
\usepackage{url}

\usepackage{booktabs}       
\usepackage{amsfonts}       
\usepackage{nicefrac}       
\usepackage{microtype}      
\usepackage{xcolor}         

\def\b{{\boldsymbol b}}

\def\h{{\boldsymbol h}}
\def\I{{\boldsymbol I}}

\def\P{{\boldsymbol P}}
\def\p{{\boldsymbol p}}

\def\U{{\boldsymbol U}}

\def\W{{\boldsymbol W}}

\def\x{{\boldsymbol x}}

\def\z{{\boldsymbol z}}

\def\DM{{\mathcal D}}

\def\FM{{\mathcal F}}

\def\LM{{\mathcal L}}

\def\NM{{\mathcal N}}

\def\SM{{\mathcal S}}
\def\TM{{\mathcal T}}

\def\WM{{\mathcal W}}

\def\ZM{{\mathcal Z}}

\def\RB{{\mathbb R}}

\usepackage{graphicx}
\usepackage{wrapfig}
\usepackage{amsmath}
\usepackage{amssymb}
\usepackage{algorithm,algorithmic}
\usepackage{rotating}
\usepackage{multirow} 
\usepackage{pifont}
\usepackage{dsfont}
\usepackage{colortbl}
\usepackage{tikz}
\usepackage{subcaption}
\usepackage{amsthm}
\usepackage{titletoc}

\newtheorem{theorem}{Theorem}

\title{Representation Editing for Multimodal Test-Time Adaptation}

\author{\textbf{Longfei Huang}$^{1}$\thanks{Equal contribution.}~~\ \
\textbf{Xiangyu Wu}$^{2*}$\ \
\textbf{Yang Yang}$^{1}$\thanks{Corresponding author.} \\
$^{1}$Nanjing University of Science and Technology\\
$^{2}$Alibaba Group
}

\iclrfinalcopy 
\begin{document}

\maketitle

\begin{abstract}
Multimodal test-time adaptation (TTA) aims to adapt a pretrained multimodal model online to distribution shift across modalities using unlabeled test data, showing broad potential in real-world applications. However, existing methods primarily focus on adjusting fused features to bridge the source-target gap, lacking explicit control over intermediate representation misalignment, which is a key driver of performance drop under distribution shift. In this work, we tackle this challenge from the perspective of representation engineering. Unlike previous TTA methods that update fusion weights in place, we propose FourIer Representation Editor (FIRE), a novel multimodal TTA approach that directly edits semantically rich intermediate representations. Specifically, we first adopt representation editors into each intermediate layer of the unimodal encoders, enabling layer-wise calibration of unimodal representations. To further enhance the diversity and stability of the low-rank editing subspaces, each representation editor performs frequency domain mixing via the fast Fourier transform to construct structured bases. Moreover, we introduce multi-level adaptation objectives to optimize these editors, jointly promoting cross-modal semantic alignment, source-target statistical alignment, and asymmetric prediction consistency. In this way, FIRE yields aligned unimodal representations for fusion and further improves prediction reliability. Extensive experiments on two widely used multimodal benchmarks under various corruption types demonstrate the superiority of FIRE over existing multimodal TTA methods.
\end{abstract}

\section{Introduction}
Multimodal learning \cite{Hybrid:conf/aaai/ZhengTWH023,MML:conf/iccv/Rakib_2025_ICCV,MML:conf/nips/JiangHY25,MML:conf/aaai/00030000N25,MML:journals/pami/DongLZCKSF26,MML:conf/aaai/HuJCWSSX26} aims to integrate heterogeneous information from different sensors to improve holistic task understanding, which has received growing attention in various real-world applications. Despite its success, multimodal models often suffer from distribution shifts between source and target data in open-world scenarios due to unpredictable factors, such as weather variations and sensor failures. Consequently, the performance of static models can degrade substantially during inference \cite{DBLP:conf/iclr/HendrycksD19,TTA:conf/cvpr/ShinTZSLGKY22,TTA:journals/corr/abs-2604-19093}. To address this issue, test-time adaptation (TTA) \cite{Tent:conf/iclr/WangSLOD21,TTA:conf/aaai/ZhangCLDYLXDWD26} has emerged as a practical and effective solution: it updates model parameters online using unlabeled test data to adapt to the target distribution, transforming static inference into an adaptive process.

Most existing TTA methods, such as Tent \cite{Tent:conf/iclr/WangSLOD21}, EATA \cite{EATA:conf/icml/NiuW0CZZT22}, and SAR \cite{SAR:conf/iclr/Niu00WCZT23}, focus on unimodal settings and fine-tuning normalization layers via entropy minimization. However, in practice, different modalities may experience varying degrees of distribution shift in multimodal learning, making multimodal TTA substantially more challenging. Specifically, multimodal systems may encounter either corruption in a single modality or simultaneous corruption across multiple modalities. Fortunately, several studies \cite{READ:conf/iclr/Yang0Z0024,TTA:journals/corr/abs-2603-00574} have investigated multimodal distribution shifts. As a pioneering work on multimodal TTA, READ \cite{READ:conf/iclr/Yang0Z0024} and its successors ABPEM \cite{ABPEM:conf/aaai/Zhao0LHY0025} and TSA \cite{TSA:conf/icml/ChenZHJWF0B25}, etc., address modality-specific corruption by updating the self-attention layer in the fusion module. Recent research, including SuMi \cite{SuMi:conf/iclr/Guo025} and PTA \cite{PTA:wang2026partition}, further considers more challenging scenarios where multiple modalities are corrupted simultaneously.

\begin{figure}[t]
    \centering
    \captionsetup[subfigure]{
        font=small,
        justification=centering,
        singlelinecheck=true
    }
    \begin{subfigure}[t]{0.24\textwidth}
        \centering
        \begin{minipage}[c][3.0cm][c]{\linewidth}
            \centering
            \includegraphics[width=0.82\linewidth]{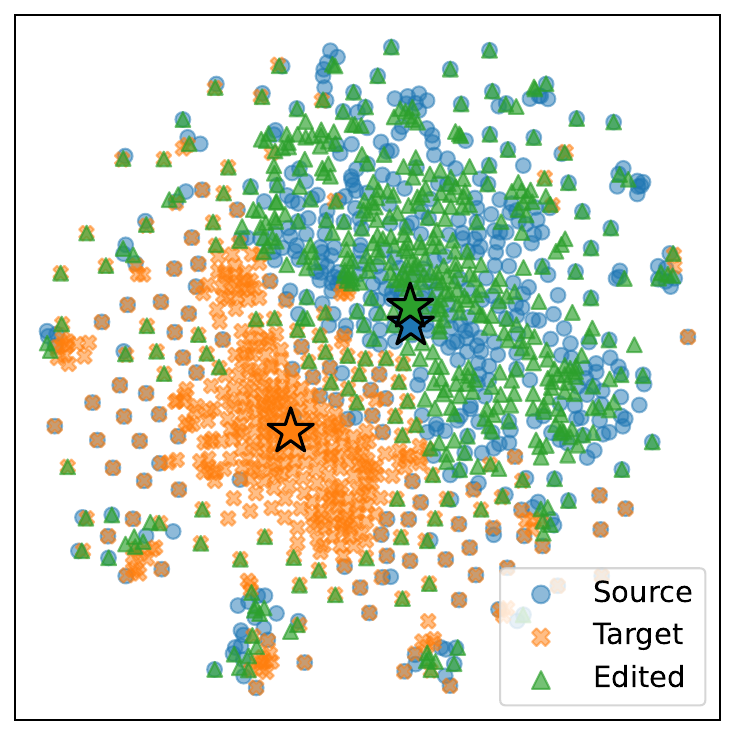}
        \end{minipage}
        \caption{Representation shift.}
        \label{fig:intro_shift}
    \end{subfigure}
    \hfill
    \begin{subfigure}[t]{0.24\textwidth}
        \centering
        \begin{minipage}[c][3.0cm][c]{\linewidth}
            \centering
            \includegraphics[width=0.98\linewidth]{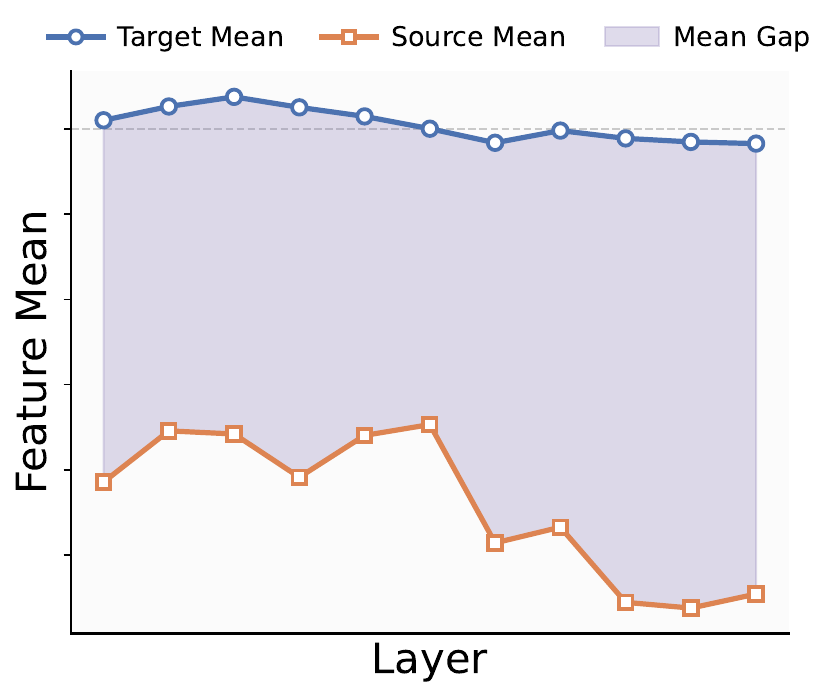}
        \end{minipage}
        \caption{Mean misalignment.}
        \label{fig:intro_mean}
    \end{subfigure}
    \hfill
    \begin{subfigure}[t]{0.24\textwidth}
        \centering
        \begin{minipage}[c][3.0cm][c]{\linewidth}
            \centering
            \includegraphics[width=0.98\linewidth]{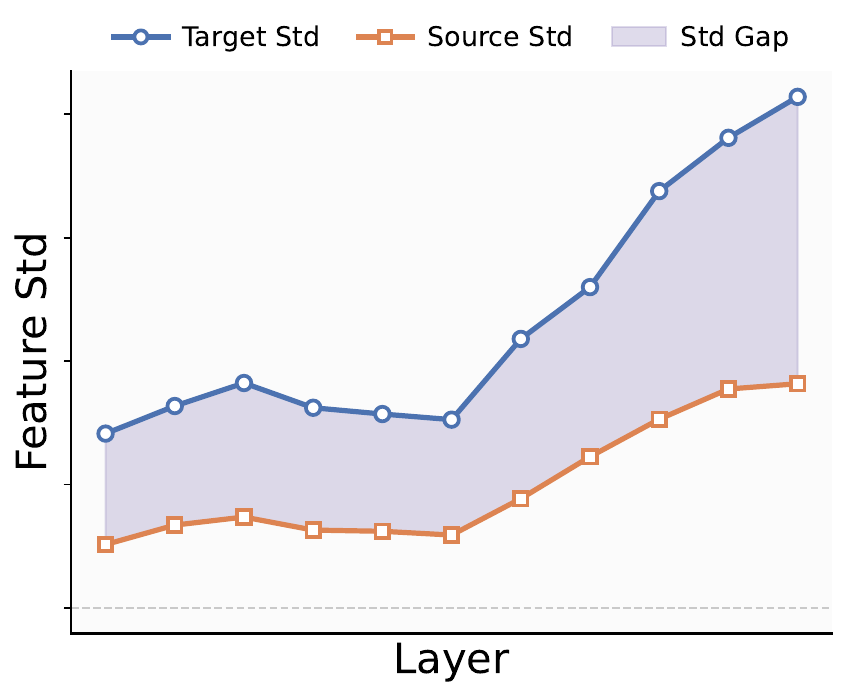}
        \end{minipage}
        \caption{Std. misalignment.}
        \label{fig:intro_std}
    \end{subfigure}
    \hfill
    \begin{subfigure}[t]{0.26\textwidth}
        \centering
        \begin{minipage}[c][3.0cm][c]{\linewidth}
            \centering
            \includegraphics[width=\linewidth]{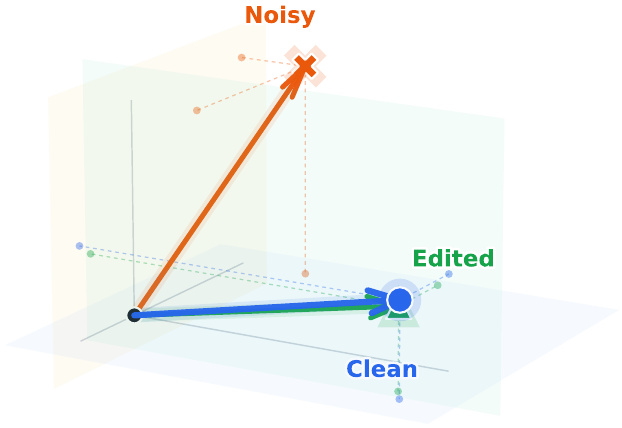}
        \end{minipage}
        \caption{Representation editing.}
        \label{fig:intro_edit}
    \end{subfigure}
    \caption{
        Overview of representation shift and editing on Kinetics50-C.
        (a) t-SNE visualization of multimodal representations from the source,
        target, and edited target domains. The star denotes the feature centroid.
        (b) and (c) Differences in intermediate-layer feature statistics between
        the source and target domains, where the mean characterizes the
        distribution center and the standard deviation (std) characterizes its
        dispersion.
        (d) Visualization of clean, noisy, and edited representations of the
        same target sample. Our method effectively mitigates the representation shift.
    }
    \label{fig:intro}
\end{figure}

Although the aforementioned methods can alleviate multimodal distribution shift, they focus more on adapting fused multimodal features while lacking explicit control over the representation shifts in intermediate layers, which serve as a precursor to multimodal distribution shift. Unlike previous works, we seek a solution from the perspective of representation engineering. As shown in Figure \ref{fig:intro} (a), we visualize multimodal features from the source and target domains on Kinetics50-C \cite{READ:conf/iclr/Yang0Z0024} dataset using t-SNE \cite{tSNE:journal/jmlr/MaatenH08} and observe a clear separation between their distributions, suggesting that representation shifts are the most direct manifestation of distribution shifts. Since representation shifts typically do not emerge abruptly at the fusion stage, we believe that the misalignment in intermediate representations is a key driver of performance drop under distribution shift. Therefore, we quantify the statistical discrepancy between source and target features in intermediate layers. As shown in Figure \ref{fig:intro} (b) and (c), the results indicate that the shift emerges in shallow layers and tends to increase with depth. This naturally raises the question: can we directly modify intermediate representations to align the target with the source domain, thereby enhancing the quality of fusion-layer representation and improving prediction reliability?

Motivated by the above insights, we propose a novel Multimodal TTA approach, FourIer Representation Editor (FIRE), to directly edit semantically rich intermediate representations, facilitating alignment between the source and target domains. Specifically, we first introduce representation editors and place them after each transformer layer in every unimodal encoder, jointly optimizing unimodal representations layer-wise. By layer-wise calibrating unimodal representations, editors align the target distribution with the source, thereby facilitating reliable fusion. Notably, only the editors are updated in TTA, while the pretrained model remains entirely frozen. Subsequently, we enhance the representation editor with fast Fourier transforms \cite{FFT:reddy1998fast,FFT:conf/nips/ChiJM20}. By incorporating frequency domain information into the low-rank subspace, FIRE integrates global coupling across dimensions, enriching the editing directions and improving adaptation stability. Furthermore, we optimize these editors with multi-level adaptation objectives, including cross-channel semantic alignment, source-target statistical calibration, and asymmetric prediction consistency. To this end, FIRE effectively calibrates intermediate representations and provides aligned representations for fusion, thereby improving prediction reliability. As shown in Figure \ref{fig:intro}(a) and (d), FIRE significantly realigns the representation distribution. Extensive experiments on widely used multimodal TTA benchmarks demonstrate that FIRE achieves significant performance improvements.

\section{Related work}
\subsection{Multimodal Test-Time Adaptation}
Test-time adaptation (TTA) \cite{TTA:conf/iclr/Li0Z0XY25,TTA:conf/iclr/WuY0CL25,TTA:conf/nips/JiaDTLZL25,TTA:conf/aaai/FanJCHCJZTW26,TTA:journals/corr/abs-2602-11743} aims to online adapt a model to the current target distribution using unlabeled test samples. Most TTA methods mainly focus on unimodal learning settings \cite{TTA:journals/pami/LiYDZS24,TTA:conf/iclr/ZhangBKZZ25,TTA:conf/aaai/ChuYWW26,TTA:conf/aaai/YangLW26}. A representative method, TENT \cite{Tent:conf/iclr/WangSLOD21}, updates the normalization layers by minimizing prediction entropy. Subsequent works, including EATA \cite{EATA:conf/icml/NiuW0CZZT22}, SAR \cite{SAR:conf/iclr/Niu00WCZT23}, and DeYO \cite{DeYO:conf/iclr/0004JLPSHY24}, design sample filtering strategies to promote adaptation with more informative target samples.

Multimodal TTA is more challenging, as different modalities may suffer from corruptions of varying severity. As a pioneering work on multimodal TTA, READ \cite{READ:conf/iclr/Yang0Z0024} highlights the reliability bias caused by unimodal corruption and addresses it by optimizing a self-attention \cite{Attn:conf/nips/VaswaniSPUJGKP17} fusion layer. Its successors further improve multimodal TTA with strategies such as attention bootstrapping \cite{ABPEM:conf/aaai/Zhao0LHY0025}, smoothing the shift \cite{SuMi:conf/iclr/Guo025}, and partition-then-adapt \cite{PTA:wang2026partition}. Other works \cite{TSA:conf/icml/ChenZHJWF0B25,BriMPR:conf/aaai/LiF26} begin to focus on unimodal information, aiming to bridge the source-target gap by correcting unimodal shifts before fusion. However, these methods lack explicit control over intermediate representation shifts. In this paper, we aim to address distribution shifts by editing unimodal representations in a layer-wise manner.

\subsection{Representation Engineering}
Representation engineering \cite{ReE:journals/corr/abs-2310-01405,ReE:conf/acl/SubramaniSP22,ReE:conf/iclr/BurnsYKS23,ReE:journals/corr/abs-2502-17601} aims to understand and control model behavior from the perspective of internal neural representations. Recent studies have explored the potential of representation-level model adaptation. Specifically, RED \cite{ReE:conf/acl/WuLWLLLZZZ024} performs direct representation editing by applying element-wise scaling and shifting to the full hidden representations in LLMs \cite{LLM:journals/corr/abs-2302-13971}. ReFT \cite{Reft:conf/nips/WuAWGJMP24} further controls partial representations within specific subspaces using low-rank projection matrices with orthogonal rows and linear projectors. These methods show that directly editing internal representations can effectively regulate model behavior. Extending from unimodal models to multimodal settings, recent works \cite{Edit:conf/iclr/0003LTLRDRHLW025,ReE:journals/corr/abs-2604-07965} begin to investigate representation editing in multimodal models. MRT \cite{Edit:conf/iclr/0003LTLRDRHLW025} transfers representation editors from LLMs to MLLMs \cite{MLLM:conf/nips/LiuLWL23a}. However, these methods are mainly designed for LLMs or MLLMs parameter-efficient fine-tuning (PEFT) \cite{PEFT:conf/icml/HoulsbyGJMLGAG19,PEFT:conf/iclr/HuSWALWWC22}, and their effectiveness in multimodal TTA remains largely unexplored. Therefore, our approach fills this gap by layer-wise editing intermediate representations at test time, enabling representation-level adaptation for multimodal TTA.

\begin{figure*}[t]
\centering
\includegraphics[width=0.77\textwidth]{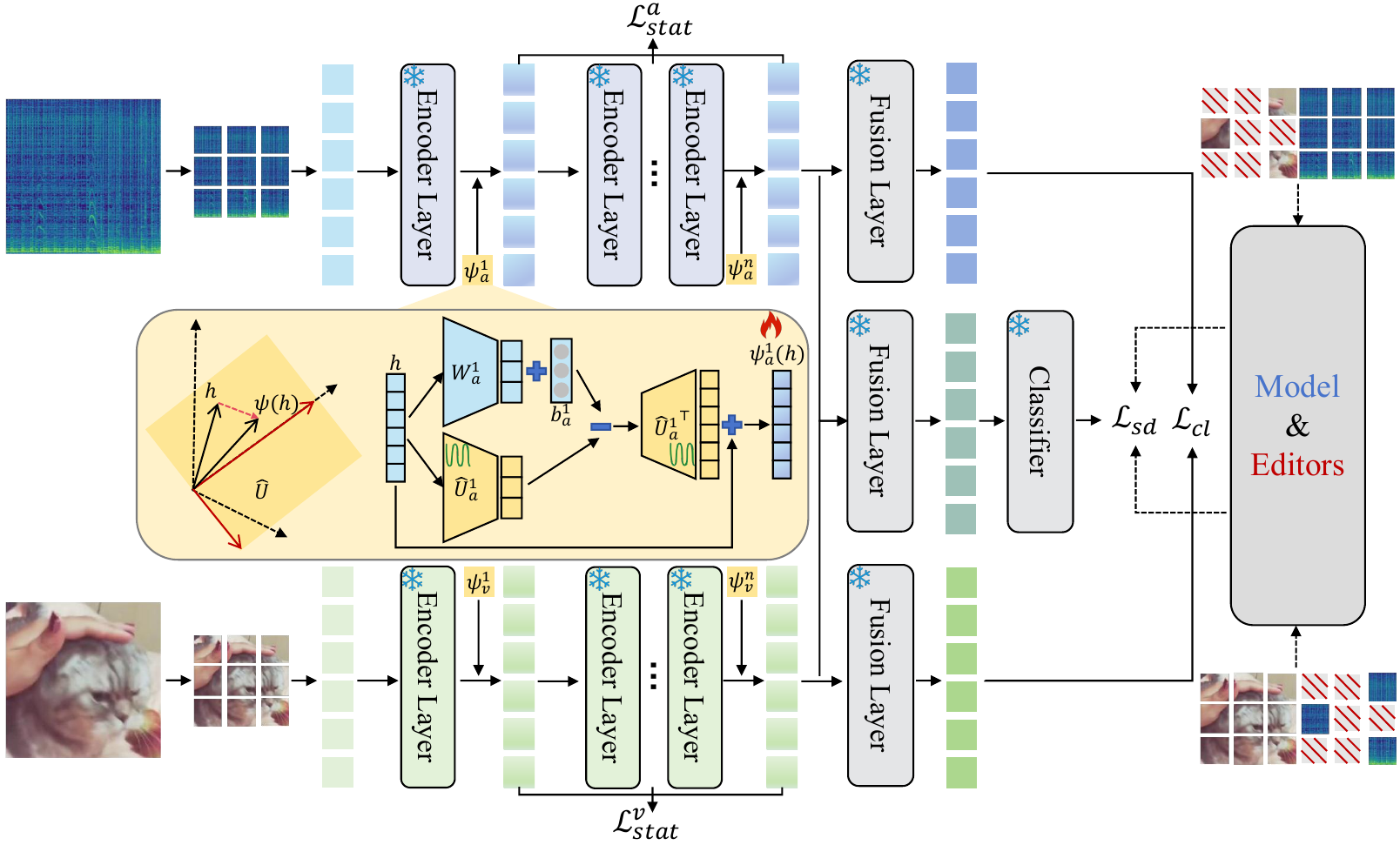}
\caption{Overview of FIRE. The entire model remains frozen, with only editors being updated. The editor aims to identify an editing vector that modifies the representations only within the linear subspace spanned by the rows of $\hat{\U}$. ``Model \& Editors'' is the same pretrained model and editors. In particular, we illustrate the representation editing process in a 3-dimensional space.}
\label{fig:overview}
\end{figure*}

\section{Methodology}
In this section, we introduce FIRE, a novel multimodal TTA method that effectively aligns the target and source distributions. Overview of our approach is provided in Figure \ref{fig:overview}.

\begin{algorithm}[t]
\caption{Algorithm of Fourier Representation Editor}
\label{alg:algorithm}
\textbf{Input:} The intermediate representation
$\h$ and editor parameters $\{\U, \W, \b\}$.\\
\textbf{Output:} The edited representation.
\begin{algorithmic}[1]
\STATE Compute the FFT of $\U$ using Equation \ref{eq:U_fft}; \\
\STATE Compute the enhanced subspace $\hat{\U}$ using Equation \ref{eq:U^}; \\
\STATE Compute the subspace representation $\h'=\W \h + \b$; \\
\STATE Compute the subspace correction $\Delta \h = \hat{\U}^{\top}(\h'-\hat{\U} \h)$;\\
\STATE Obtain edited representation $\psi(\h)=\h+\Delta \h$; \\
\STATE \textbf{Return} representation $\psi(\h)$.
\end{algorithmic}
\end{algorithm}

\subsection{Problem Statement} \label{sec:problem}
Without loss of generality, we take two modalities, i.e., audio and video, as a showcase for clarity of presentation. For the sake of simplicity, we use superscript $r$ to indicate the module corresponding to a specific modality in this section, where $r \in \{ a, v\}$. For clarity, we use $F_{\Theta}=\{ f_{\Theta^a}, f_{\Theta^v}, f_{\Theta^m}, f_{\Theta^c}\}$ to denote the off-the-shelf model that trained on the source domain $\DM_{\SM}=\{ (\x^{a}_i, \x^{v}_i, y_i) \}^{N_{\SM}}_{i=1}$, where $f_{\Theta^a}$ and $f_{\Theta^v}$ are the specific encoders for modality $a$ and $v$, $f_{\Theta^m}$ and $f_{\Theta^c}$ are the fusion layer and the following classifier head. Assume that the target domain $\DM_{\TM}=\{ (\x^{a}_j, \x^{v}_j) \}^{N_{\TM}}_{j=1}$, where the target data follow the distribution $P_{\TM}(\x)$. During training, the model $F_{\Theta}$ would fit the distribution of source data, i.e., $P_{\SM}(\x)$. As a result, in the inference stage, when a domain shift exists between the source and target distributions, i.e., $P_{\SM}(\x) \neq P_{\TM}(\x)$, the performance of $F_{\Theta}$ deteriorates significantly. The goal of TTA is to adapt $F_{\Theta}$ online to the target domain. 

\subsection{Fourier Representation Editor} \label{sec:FIRE}
Since multiple studies \cite{linear:smolensky1986neural,linear:conf/acl/LasriPLPC22,linear:journals/corr/abs-2307-15054} have shown that human-interpretable concepts are encoded linearly and that linear subspace interventions can correspond to specific semantic attributes, we facilitate representation editing by targeting linear subspaces. The final prediction of a multimodal model is determined by the combined contributions of multiple individual modalities, which naturally enables multimodal TTA to be formulated as a set of unimodal TTA problems. We apply representation editors after each transformer layer of the unimodal encoders and jointly optimize all unimodal representations. Notably, during adaptation, only these editors are updated, while the entire model remains completely frozen.

\noindent\textbf{Representation Editor. }Inspired by the linear representation hypothesis \cite{Reft:conf/nips/WuAWGJMP24}, we introduce a representation editor to edit semantically rich intermediate representations. The editor updates the original representation $\h$ within a specific subspace to reflect the desired intervention obtained from a linear projection $\W \h + \b$, where both $\W$ and $\b$ are learnable parameters. Then, the editing operation is confined to the subspace spanned by the rows of a low-rank matrix $\U$ with orthonormal rows. We employ QR decomposition to maintain the row orthonormality of $\U$. Therefore, the editor only adjusts the aspects of the representation associated with the target attributes while preserving the remaining semantic information. The editor can be formed as:
\begin{align}
\Psi(\h)=\h+\U^{\top}(\W \h + \b - \U \h), \label{eq:editor}
\end{align}
where $\U \in \RB^{d_m\times d_n}$ and $\W \in \RB^{d_m\times d_n}$ are low-rank matrices. $d_m$ is the rank of the subspace and $d_n$ is the hidden dimensionality, i.e., $d_m \ll d_n$. Specifically, $\U\h$ projects the original representation $\h$ onto the subspace spanned by $\U$, while $\W \h + \b$ provides the target values within that subspace via a linear projection from $\h$. The difference $\W \h + \b - \U \h$ computes the necessary intervention, which is subsequently mapped back into the original space via $\U^{\top}$. By adding this intervention to $\h$, we obtain the edited representation $\Psi(\h)$ that incorporates the desired modifications while maintaining the components orthogonal to the subspace $\U$.

\noindent\textbf{Fourier-Enhanced Subspace. }The FFT is a powerful algorithm for computing the Discrete Fourier Transform (DFT). Given a sequence $\{ \z_m \}$ where $m$ is a member of the interval $m \in \left [0, M-1 \right ]$, the DFT is defined as:
\begin{align}
\FM(\z)=\ZM_k=\sum_{m=0}^{M-1} \z_m e^{-i2\pi\frac{k}{M}m}, 0\leq k \leq M-1.\label{eq:fft}
\end{align}

Recent parameter-efficient fine-tuning studies \cite{PEFT:conf/icml/GaoWCLWC024,PEFT:conf/nips/Zeng0WWGHWL24,PEFT:conf/nips/BorseKPBGPG0HP24} demonstrate that Fourier transforms can effectively bridge the gap between pretraining and fine-tuning datasets by incorporating additional information. Motivated by this, we explore integrating FFT operations into the representation editor to support on-the-fly adjustments during TTA. To this end, we introduce the FFT to impose global coupling across dimensions in the low-rank matrix $\U$, thereby enriching editable directions and enabling more stable adaptation of the representation editor. Specifically, we first apply an FFT to the low-rank matrix $\U$ along the hidden dimensions:
\begin{align}
\U_{\FM}=\mathfrak{R}(\FM(\U)).\label{eq:U_fft}
\end{align}
To maintain consistency with the original pretrained representation space, we use the FFT only to construct the editing basis and retain its real component $\mathfrak{R}$, ensuring that the final editing basis and representation updates remain in the real-valued space. We then enhance the subspace by combining the editing directions in the original space with the Fourier-domain editing directions, formulated as:
\begin{align}
\hat{\U}=\operatorname{qf}((\U + \U_{\FM})^{\top})^{\top},\label{eq:U^}
\end{align}
where $\operatorname{qf}(\cdot)$ denotes the orthogonal factor obtained from the QR decomposition. 

To improve efficiency, we adopt the Cooley-Tukey FFT algorithm \cite{FFT:cooley1965algorithm}, following common practice \cite{FFT:conf/naacl/Lee-ThorpAEO22}.
The final Fourier representation editor is defined as:
\begin{align}
\psi(\h)=\h+\hat{\U}^{\top}(\W \h + \b - \hat{\U} \h).\label{eq:editor_FFT}
\end{align}
Therefore, the representation editing operation is constrained to the Fourier-enhanced low-rank orthogonal subspace, enabling the method to introduce more diverse editing directions and improve stability without adding extra parameters. To ensure stable TTA, we initialize the representation editor in a no-op scheme, unlike common methods \cite{Reft:conf/nips/WuAWGJMP24,Edit:conf/iclr/0003LTLRDRHLW025}. Our algorithm is summarized in Algorithm \ref{alg:algorithm}. Moreover, we provide a theoretical analysis of the stability and effectiveness of the self-training subspace in the appendix.

\begin{table}[t]
\centering
\caption{Comparison with SOTA methods on Kinetics50-C (top) and VGGSound-C (bottom) benchmarks with corrupted video modality (severity level 5).}
\label{tab:video}
\setlength{\tabcolsep}{0.85mm}
\scriptsize
\begin{tabular}{lcccccccccccccccc}
\toprule
 &
  \multicolumn{3}{c}{Noise} &
  \multicolumn{4}{c}{Blur} &
  \multicolumn{4}{c}{Weather} &
  \multicolumn{4}{c}{Digital} &
   \\ \cmidrule(lr){2-4}\cmidrule(lr){5-8}\cmidrule(lr){9-12}\cmidrule(lr){13-16}
Method &
  Gauss. &
  Shot &
  Impul. &
  Defoc. &
  Glass &
  Motion &
  Zoom &
  Snow &
  Frost &
  Fog &
  Bright. &
  Contr. &
  Elast. &
  Pixel. &
  Jpeg &
  \cellcolor{blue!20}Avg. \\ \midrule
Source &
  48.2 &
  50.0 &
  49.2 &
  67.7 &
  61.6 &
  70.6 &
  66.1 &
  60.9 &
  60.7 &
  44.7 &
  75.9 &
  51.8 &
  65.5 &
  68.7 &
  66.1 &
  \cellcolor{blue!20}60.5 \\
$\bullet$ Tent$_{\mathrm{ICLR21}}$ &
  48.2 &
  49.8 &
  48.7 &
  67.7 &
  62.1 &
  70.8 &
  67.2 &
  61.8 &
  61.4 &
  33.7 &
  76.0 &
  51.2 &
  66.6 &
  69.6 &
  66.9 &
  \cellcolor{blue!20}60.1 \\
$\bullet$ T3A$_{\mathrm{NeurIPS21}}$ &
  47.8 &
  49.0 &
  48.2 &
  62.8 &
  59.3 &
  65.5 &
  62.0 &
  60.2 &
  57.5 &
  50.7 &
  69.8 &
  47.6 &
  62.8 &
  65.7 &
  64.5 &
  \cellcolor{blue!20}58.2 \\
$\bullet$ EATA$_{\mathrm{ICML22}}$ &
  48.7 &
  50.4 &
  49.6 &
  67.8 &
  63.2 &
  70.8 &
  67.5 &
  62.5 &
  62.5 &
  47.9 &
  76.1 &
  52.2 &
  66.9 &
  69.7 &
  67.4 &
  \cellcolor{blue!20}61.5 \\
$\bullet$ SAR$_{\mathrm{ICLR23}}$ &
  48.5 &
  50.2 &
  49.2 &
  67.8 &
  63.8 &
  70.9 &
  67.9 &
  63.1 &
  62.7 &
  38.7 &
  76.1 &
  52.2 &
  67.1 &
  69.8 &
  67.4 &
  \cellcolor{blue!20}61.0 \\
$\bullet$ DeYO$_{\mathrm{ICLR24}}$ &
  48.6 &
  50.2 &
  49.4 &
  67.9 &
  62.6 &
  70.9 &
  67.4 &
  62.5 &
  62.3 &
  40.4 &
  76.1 &
  52.2 &
  66.8 &
  69.8 &
  67.3 &
  \cellcolor{blue!20}61.0 \\
$\bullet$ FOA$_{\mathrm{ICML24}}$ &
  49.2 &
  50.8 &
  49.7 &
  66.0 &
  65.5 &
  69.8 &
  67.4 &
  62.8 &
  65.7 &
  60.3 &
  74.9 &
  51.9 &
  69.5 &
  68.8 &
  68.0 &
  \cellcolor{blue!20}62.7 \\
$\bullet$ READ$_{\mathrm{ICLR24}}$ &
  50.7 &
  52.2 &
  51.4 &
  67.9 &
  65.3 &
  71.1 &
  68.7 &
  64.0 &
  65.8 &
  56.3 &
  76.3 &
  53.6 &
  68.7 &
  70.0 &
  68.6 &
  \cellcolor{blue!20}63.4 \\
$\bullet$ ABPEM$_{\mathrm{AAAI25}}$ &
  52.1 &
  53.1 &
  52.8 &
  \underline{69.0} &
  65.6 &
  71.8 &
  68.8 &
  64.1 &
  65.7 &
  57.9 &
  76.6 &
  54.3 &
  69.2 &
  71.1 &
  69.2 &
  \cellcolor{blue!20}64.1 \\
$\bullet$ SuMi$_{\mathrm{ICLR25}}$ &
  50.1 &
  50.7 &
  50.4 &
  68.2 &
  65.6 &
  72.2 &
  \underline{69.7} &
  65.7 &
  67.0 &
  56.5 &
  \bf{77.1} &
  55.2 &
  69.3 &
  71.2 &
  68.9 &
  \cellcolor{blue!20}63.9 \\
$\bullet$ TSA$_{\mathrm{ICML25}}$ &
  52.6 &
  52.3 &
  52.0 &
  68.7 &
  \underline{68.0} &
  70.7 &
  68.8 &
  65.2 &
  66.6 &
  \underline{64.3} &
  74.6 &
  \underline{57.4} &
  70.5 &
  69.0 &
  66.2 &
  \cellcolor{blue!20}64.5 \\
$\bullet$ PTA$_{\mathrm{NeurIPS25}}$ &
  52.9 &
  53.0 &
  52.5 &
  68.5 &
  66.9 &
  \underline{72.4} &
  69.1 &
  64.9 &
  \underline{67.1} &
  64.1 &
  75.9 &
  55.1 &
  69.2 &
  70.2 &
  68.4 &
  \cellcolor{blue!20}64.7 \\
$\bullet$ BriMPR$_{\mathrm{AAAI26}}$ &
  \underline{55.3} &
  \underline{56.1} &
  \underline{56.7} &
  67.8 &
  67.9 &
  70.6 &
  68.8 &
  65.9 &
  66.2 &
  64.1 &
  76.2 &
  56.3 &
  \underline{72.0} &
  \underline{73.7} &
  \underline{70.5} &
  \cellcolor{blue!20}\underline{65.9} \\
  \rowcolor{green!20}
$\bullet$ FIRE &
  \bf{59.3} &
  \bf{59.8} &
  \bf{59.9} &
  \bf{70.0} &
  \bf{71.1} &
  \bf{73.4} &
  \bf{71.8} &
  \bf{68.7} &
  \bf{69.3} &
  \bf{67.3} &
  \underline{76.6} &
  \bf{61.4} &
  \bf{74.2} &
  \bf{75.1} &
  \bf{71.8} &
  \bf{68.6} \\ \midrule
Source &
  52.9 &
  53.0 &
  53.1 &
  57.2 &
  57.2 &
  58.5 &
  57.5 &
  56.5 &
  57.1 &
  55.6 &
  59.2 &
  53.7 &
  57.1 &
  56.4 &
  57.3 &
  \cellcolor{blue!20}56.2 \\
$\bullet$ Tent$_{\mathrm{ICLR21}}$ &
  53.2 &
  53.3 &
  53.3 &
  56.8 &
  56.6 &
  57.9 &
  57.2 &
  55.9 &
  56.6 &
  56.5 &
  58.5 &
  53.9 &
  57.5 &
  56.8 &
  56.9 &
  \cellcolor{blue!20}56.1 \\
$\bullet$ T3A$_{\mathrm{NeurIPS21}}$ &
  53.2 &
  53.1 &
  53.3 &
  57.2 &
  57.3 &
  58.5 &
  58.0 &
  56.6 &
  57.2 &
  56.7 &
  59.4 &
  54.3 &
  57.8 &
  56.9 &
  57.5 &
  \cellcolor{blue!20}56.5 \\
$\bullet$ EATA$_{\mathrm{ICML22}}$ &
  53.4 &
  53.5 &
  53.5 &
  57.0 &
  57.0 &
  58.3 &
  57.7 &
  56.3 &
  57.0 &
  56.8 &
  59.1 &
  54.2 &
  57.9 &
  57.2 &
  57.2 &
  \cellcolor{blue!20}56.4 \\
$\bullet$ SAR$_{\mathrm{ICLR23}}$ &
  53.3 &
  53.3 &
  53.3 &
  56.4 &
  56.5 &
  57.9 &
  57.3 &
  55.6 &
  56.4 &
  56.3 &
  58.8 &
  53.7 &
  57.8 &
  56.9 &
  57.0 &
  \cellcolor{blue!20}56.0 \\
$\bullet$ DeYO$_{\mathrm{ICLR24}}$ &
  53.3 &
  53.4 &
  53.4 &
  56.7 &
  56.7 &
  58.0 &
  57.3 &
  56.0 &
  56.8 &
  56.4 &
  58.7 &
  53.9 &
  57.7 &
  57.0 &
  57.0 &
  \cellcolor{blue!20}56.2 \\
$\bullet$ FOA$_{\mathrm{ICML24}}$ &
  52.7 &
  52.7 &
  52.7 &
  53.2 &
  53.6 &
  53.6 &
  53.8 &
  53.4 &
  53.4 &
  53.3 &
  55.6 &
  52.5 &
  55.3 &
  53.7 &
  54.4 &
  \cellcolor{blue!20}53.6 \\
$\bullet$ READ$_{\mathrm{ICLR24}}$ &
  53.8 &
  54.0 &
  53.8 &
  58.0 &
  57.9 &
  59.2 &
  \underline{58.7} &
  57.1 &
  \bf{58.2} &
  50.0 &
  60.0 &
  \underline{55.2} &
  58.5 &
  57.7 &
  58.2 &
  \cellcolor{blue!20}56.7 \\
$\bullet$ ABPEM$_{\mathrm{AAAI25}}$ &
  46.5 &
  46.7 &
  46.5 &
  54.2 &
  55.1 &
  56.4 &
  55.2 &
  51.3 &
  53.2 &
  52.1 &
  56.6 &
  52.1 &
  54.4 &
  51.7 &
  54.7 &
  \cellcolor{blue!20}52.4 \\
$\bullet$ SuMi$_{\mathrm{ICLR25}}$ &
  54.0 &
  54.3 &
  53.8 &
  \underline{58.2} &
  58.4 &
  \underline{59.4} &
  \underline{58.7} &
  \underline{57.5} &
  \bf{58.2} &
  57.6 &
  59.4 &
  54.8 &
  59.0 &
  57.5 &
  58.2 &
  \cellcolor{blue!20}57.3 \\
$\bullet$ TSA$_{\mathrm{ICML25}}$ &
  53.9 &
  53.9 &
  54.0 &
  57.6 &
  58.0 &
  59.0 &
  58.6 &
  56.9 &
  57.0 &
  56.6 &
  59.8 &
  54.7 &
  58.6 &
  56.7 &
  57.9 &
  \cellcolor{blue!20}56.9 \\
$\bullet$ PTA$_{\mathrm{NeurIPS25}}$ &
  48.3 &
  48.2 &
  48.1 &
  55.9 &
  56.5 &
  57.8 &
  56.4 &
  53.9 &
  54.2 &
  53.2 &
  58.3 &
  52.0 &
  55.8 &
  53.6 &
  56.3 &
  \cellcolor{blue!20}53.9 \\
$\bullet$ BriMPR$_{\mathrm{AAAI26}}$ &
  \underline{54.9} &
  \underline{55.0} &
  \underline{55.0} &
  57.9 &
  \underline{58.5} &
  58.9 &
  \underline{58.7} &
  \underline{57.5} &
  \underline{58.0} &
  \underline{58.5} &
  \underline{60.3} &
  54.5 &
  \underline{59.7} &
  \underline{59.3} &
  \underline{59.0} &
  \cellcolor{blue!20}\underline{57.7} \\
\rowcolor{green!20} 
$\bullet$ FIRE &
  \bf{55.5} &
  \bf{55.5} &
  \bf{55.5} &
  \bf{58.4} &
  \bf{59.0} &
  \bf{59.6} &
  \bf{59.3} &
  \bf{57.8} &
  \bf{58.2} &
  \bf{58.7} &
  \bf{60.5} &
  \bf{56.2} &
  \bf{60.1} &
  \bf{59.7} &
  \bf{59.4} &
  \bf{58.2} \\ \bottomrule
\end{tabular}
\end{table}

\subsection{Adaptation through Self-Training} \label{sec:loss}
Due to the absence of supervision at test time, we optimize the representation editors with self-training objectives. Inspired by related tasks \cite{FOA:conf/icml/NiuMCWZ24,CLIP:conf/icml/RadfordKHRGASAM21,Mask:conf/iclr/RamazanovaPGA25,BriMPR:conf/aaai/LiF26}, we constrain editor adaptation at multiple levels, including cross-modal semantic alignment, source-target statistical alignment, and asymmetric prediction consistency.

\noindent\textbf{Cross-modal Semantic Alignment Loss. }The cross-modal semantic alignment loss aims to realign semantic relationships across modalities. Cross-modal contrastive learning \cite{CLIP:conf/icml/RadfordKHRGASAM21} is a key paradigm for multimodal learning and can enhance semantic alignment across modalities. In multimodal TTA, we introduce contrastive loss to help the model realign the semantics of shifted and unshifted modalities. Specifically, given final unimodal representations $\tilde{\h}_a$ and $\tilde{\h}_v$, the cross-modal semantic alignment loss is defined as:
\begin{equation}
\begin{aligned}
\LM_{cl}=&-\frac{1}{2B}\sum_{j=1}^{B}(\log{\frac{e^{sim(\tilde{\h}_{a,j},\tilde{\h}_{v,j})/\tau}}{\sum_{k=1}^{B}e^{sim(\tilde{\h}_{a,j},\tilde{\h}_{v,k})/\tau}}} \\ 
& + \log{\frac{e^{sim(\tilde{\h}_{a,j},\tilde{\h}_{v,j})/\tau}}{\sum_{k=1}^{B}e^{sim(\tilde{\h}_{a,k},\tilde{\h}_{v,j})/\tau}}}),\label{loss:cl}
\end{aligned}
\end{equation}
where $B$ denotes the mini-batch size, $sim(\cdot,\cdot)$ denotes the cosine similarity, and $\tau$ denotes the temperature hyperparameter set to its default value..

\noindent\textbf{Source-target Statistical Alignment Loss. }Source-target statistical alignment loss is an effective distribution calibration objective, which calibrates target representations by aligning their layer-wise statistics with those of the source domain. Specifically, we offline compute the mean $\{\mu_{l}^{\SM,r} \}_{l=1}^{n}$ and standard deviation $\{\sigma_{l}^{\SM,r} \}_{l=1}^{n}$ of the source-domain unimodal representations at each layer after model pretraining and before TTA, where $l$ denotes the layer index. During TTA, given a mini-batch of test samples, we obtain the edited representation sequence from the $l$-th layer of a specific modality encoder. We then compute the corresponding unimodal representation statistics of the test samples, $\{\mu_{l}^{\TM,r} \}_{l=1}^{n}$ and $\{\sigma_{l}^{\TM,r} \}_{l=1}^{n}$, and align them with the precomputed source statistics. This enables target representation calibration toward the source distribution, requiring only cached source statistics rather than access to the source domain data. Formally, the loss is defined by:
\begin{equation}
\begin{aligned}
\LM_{stat}(\mu^{\SM},\sigma^{\SM},\mu^{\TM},\sigma^{\TM})=\sum_{r\in \{ a,v\}}\LM_{stat}^{r}(\mu^{\SM},\sigma^{\SM},\mu^{\TM},\sigma^{\TM})\\
=\sum_{r\in \{ a,v\}} \frac{1}{n} \sum_{l=1}^{n}(\|\mu_{l}^{\TM,r}-\mu_{l}^{\SM,r}\|_{2}+\|\sigma_{l}^{\TM,r}-\sigma_{l}^{\SM,r}\|_{2}),\label{loss:stat}
\end{aligned}
\end{equation}
where $\|\cdot \|_2$ denotes the Euclidean norm. Thus, statistics matching provides a lightweight proxy for source-target representation alignment without assuming a specific representation distribution.

\noindent\textbf{Asymmetric Prediction Consistency Loss. }To improve model predictions, we introduce an asymmetric prediction consistency loss. Ideally, the model $F_{\Theta}$ should produce consistent predictions under both complete and incomplete modalities, making it robust to distribution shifts. Specifically, for each test sample, we randomly mask a portion of information from one modality (75\% by default) while keeping the other modality unchanged, yielding two sets of incomplete sample pairs. These two sets are then fed into the model and layer-wise calibrated by the representation editor at each layer to obtain the corresponding predictions. Finally, we use self-distillation to encourage the model to be insensitive to the specific modality source present during testing. The loss is formulated as:
\begin{align}
\LM_{CE}^{r}(\p^{r},\p)=\frac{1}{B}\sum_{j=1}^{B}\ell(\p^{r}_{j},\p_{j})=-\frac{1}{B}\sum_{j=1}^{B}\p_j^\top\log(\p^{r}_{j}),\label{loss:msd}
\end{align}
where $\p^{r}$ denotes the prediction after masking modality $r$, $\p$ denotes the prediction with complete modalities, and $\ell$ is the cross entropy loss. Soft predictions as consistency targets avoid committing to potentially incorrect hard pseudo-labels, thereby reducing the risk of confirmation bias.

In multimodal TTA, different modalities contribute unequally to adaptation \cite{READ:conf/iclr/Yang0Z0024}. Treating all incomplete sample pairs equally may degrade unshifted modalities. Therefore, when one modality is shifted, we encourage masking the other unshifted modality, forcing the representation editor to learn target-to-source alignment. Based on this insight, asymmetric prediction consistency loss weights the losses of all incomplete sample pairs according to the discrepancy between the current batch statistics of each modality and the corresponding source domain statistics:
\begin{align}
\LM_{pc}(\p^{a},\p^{v},\p)=\frac{\LM_{stat}^{v}}{\LM_{stat}}\LM_{CE}^{a}(\p^{a},\p)+\frac{\LM_{stat}^{a}}{\LM_{stat}}\LM_{CE}^{v}(\p^{v},\p).\label{loss:sd}
\end{align}

\begin{table}[t]
\centering
\caption{Comparison with SOTA methods on Kinetics50-C (left) and VGGSound-C (right) benchmarks with corrupted audio modality (severity level 5).}
\label{tab:audio}
\setlength{\tabcolsep}{0.85mm}
\scriptsize
\begin{tabular}{lcccccccccccccc}
\toprule
 &
  \multicolumn{3}{c}{Noise} &
  \multicolumn{3}{c}{Weather} &
   &
  \multicolumn{3}{c}{Noise} &
  \multicolumn{3}{c}{Weather} &
   \\ \cmidrule(lr){2-4} \cmidrule(lr){5-7} \cmidrule(lr){9-11} \cmidrule(lr){12-14}
Method &
  Gauss. &
  Traff. &
  Crowd &
  Rain &
  Thund. &
  Wind &
  \cellcolor{blue!20}Avg. &
  Gauss. &
  Traff. &
  Crowd &
  Rain &
  Thund. &
  Wind &
  \cellcolor{blue!20}Avg. \\ \midrule
Source &
  74.3 &
  65.3 &
  68.0 &
  70.3 &
  68.0 &
  70.5 &
  \cellcolor{blue!20}69.4 &
  37.3 &
  21.2 &
  16.9 &
  21.8 &
  27.3 &
  25.7 &
  \cellcolor{blue!20}25.0 \\
$\bullet$ Tent$_{\mathrm{ICLR21}}$ &
  74.6 &
  67.4 &
  69.5 &
  70.8 &
  67.6 &
  71.2 &
  \cellcolor{blue!20}70.2 &
  10.8 &
  2.8 &
  1.8 &
  2.9 &
  5.6 &
  3.9 &
  \cellcolor{blue!20}4.6 \\
$\bullet$ T3A$_{\mathrm{NeurIPS21}}$ &
  71.4 &
  63.0 &
  65.4 &
  66.8 &
  66.1 &
  67.8 &
  \cellcolor{blue!20}66.8 &
  39.8 &
  28.2 &
  24.8 &
  28.6 &
  34.7 &
  30.6 &
  \cellcolor{blue!20}31.1 \\
$\bullet$ EATA$_{\mathrm{ICML22}}$ &
  74.6 &
  67.3 &
  69.4 &
  70.8 &
  69.8 &
  71.0 &
  \cellcolor{blue!20}70.5 &
  40.2 &
  30.0 &
  27.8 &
  29.7 &
  36.5 &
  32.2 &
  \cellcolor{blue!20}32.7 \\
$\bullet$ SAR$_{\mathrm{ICLR23}}$ &
  74.6 &
  67.0 &
  69.2 &
  70.9 &
  69.5 &
  70.9 &
  \cellcolor{blue!20}70.3 &
  30.4 &
  5.5 &
  8.0 &
  9.3 &
  32.5 &
  17.2 &
  \cellcolor{blue!20}17.1 \\
$\bullet$ DeYO$_{\mathrm{ICLR24}}$ &
  74.6 &
  67.0 &
  69.3 &
  70.8 &
  69.0 &
  71.0 &
  \cellcolor{blue!20}70.3 &
  22.9 &
  4.9 &
  15.8 &
  4.9 &
  16.5 &
  20.0 &
  \cellcolor{blue!20}14.2 \\
$\bullet$ FOA$_{\mathrm{ICML24}}$ &
  73.8 &
  \underline{70.0} &
  70.5 &
  71.0 &
  73.0 &
  71.2 &
  \cellcolor{blue!20}71.6 &
  31.5 &
  26.2 &
  23.7 &
  31.0 &
  34.2 &
  26.7 &
  \cellcolor{blue!20}28.9 \\
$\bullet$ READ$_{\mathrm{ICLR24}}$ &
  \underline{74.8} &
  69.2 &
  69.9 &
  71.4 &
  72.4 &
  71.0 &
  \cellcolor{blue!20}71.5 &
  39.9 &
  29.4 &
  26.8 &
  30.8 &
  36.8 &
  30.7 &
  \cellcolor{blue!20}32.4 \\
$\bullet$ ABPEM$_{\mathrm{AAAI25}}$ &
  74.7 &
  68.5 &
  70.3 &
  \underline{71.7} &
  72.3 &
  71.2 &
  \cellcolor{blue!20}71.4 &
  38.5 &
  27.6 &
  25.2 &
  26.5 &
  32.7 &
  26.5 &
  \cellcolor{blue!20}29.5 \\
$\bullet$ SuMi$_{\mathrm{ICLR25}}$ &
  \bf{75.1} &
  68.9 &
  70.6 &
  71.6 &
  72.8 &
  \bf{72.1} &
  \cellcolor{blue!20}71.9 &
  \bf{41.9} &
  26.3 &
  27.9 &
  31.6 &
  37.1 &
  34.1 &
  \cellcolor{blue!20}33.2 \\
$\bullet$ TSA$_{\mathrm{ICML25}}$ &
  74.5 &
  69.6 &
  70.5 &
  71.4 &
  72.0 &
  71.0 &
  \cellcolor{blue!20}71.5 &
  \underline{41.5} &
  31.8 &
  30.9 &
  \bf{32.6} &
  38.9 &
  32.6 &
  \cellcolor{blue!20}34.7 \\
$\bullet$ PTA$_{\mathrm{NeurIPS25}}$ &
  73.5 &
  \underline{70.0} &
  70.4 &
  70.3 &
  \underline{73.4} &
  70.8 &
  \cellcolor{blue!20}71.4 &
  40.5 &
  30.8 &
  31.8 &
  32.0 &
  38.6 &
  31.7 &
  \cellcolor{blue!20}34.2 \\ 
$\bullet$ BriMPR$_{\mathrm{AAAI26}}$ &
  \underline{74.8} &
  69.6 &
  \underline{71.7} &
  71.5 &
  72.4 &
  \underline{72.0} &
  \cellcolor{blue!20}\underline{72.0} &
  39.3 &
  \underline{35.0} &
  \underline{36.7} &
  \underline{32.5} &
  \underline{41.0} &
  \underline{34.6} &
  \cellcolor{blue!20}\underline{36.5} \\
\rowcolor{green!20}
$\bullet$ FIRE &
  \underline{74.8} &
  \bf{72.4} &
  \bf{73.5} &
  \bf{72.4} &
  \bf{75.2} &
  \bf{72.1} &
  \bf{73.4} &
  39.9 &
  \bf{35.5} &
  \bf{37.2} &
  32.4 &
  \bf{41.6} &
  \bf{35.1} &
  \bf{36.9} \\ \bottomrule
\end{tabular}
\end{table}

\noindent\textbf{Multi-level Adaptation Objective. }The final multi-level adaptation objective combines these three losses:
\begin{align}
\LM_{all}=\LM_{cl}+\LM_{stat}+\LM_{pc}.\label{loss:final}
\end{align}
By minimizing this final combined loss, we can optimize the representation editors to achieve target-to-source alignment at test time. 

\begin{table}[t]
\centering
\caption{Comparison with SOTA methods on Kinetics50-C benchmarks for multimodal domain shifts (severity level 5).}
\label{tab:ks50_both}
\setlength{\tabcolsep}{0.85mm}
\scriptsize
\begin{tabular}{lcccccccccccccccc}
\toprule
 &
  \multicolumn{3}{c}{Noise} &
  \multicolumn{4}{c}{Blur} &
  \multicolumn{4}{c}{Weather} &
  \multicolumn{4}{c}{Digital} &
   \\ \cmidrule(lr){2-4}\cmidrule(lr){5-8}\cmidrule(lr){9-12}\cmidrule(lr){13-16}
Method &
  Gauss. &
  Shot &
  Impul. &
  Defoc. &
  Glass &
  Motion &
  Zoom &
  Snow &
  Frost &
  Fog &
  Bright. &
  Contr. &
  Elast. &
  Pixel. &
  Jpeg &
  \cellcolor{blue!20}Avg. \\ \midrule
Source &
  13.1 &
  14.1 &
  13.3 &
  37.2 &
  37.4 &
  45.3 &
  41.8 &
  29.4 &
  32.6 &
  20.4 &
  55.2 &
  18.3 &
  42.5 &
  38.8 &
  37.8 &
  \cellcolor{blue!20}31.8 \\
$\bullet$ Tent$_{\mathrm{ICLR21}}$ &
  9.1 &
  9.7 &
  9.1 &
  32.5 &
  34.1 &
  43.5 &
  40.2 &
  23.2 &
  28.3 &
  13.2 &
  55.1 &
  13.7 &
  40.7 &
  34.7 &
  35.0 &
  \cellcolor{blue!20}28.1 \\
$\bullet$ T3A$_{\mathrm{NeurIPS21}}$ &
  12.3 &
  13.8 &
  12.7 &
  34.7 &
  35.5 &
  42.0 &
  39.3 &
  29.2 &
  28.7 &
  20.9 &
  51.9 &
  17.1 &
  40.6 &
  37.5 &
  37.4 &
  \cellcolor{blue!20}30.2 \\
$\bullet$ EATA$_{\mathrm{ICML22}}$ &
  12.9 &
  14.0 &
  13.1 &
  38.1 &
  38.7 &
  46.9 &
  43.1 &
  30.6 &
  33.0 &
  20.2 &
  56.5 &
  18.2 &
  43.7 &
  40.7 &
  39.0 &
  \cellcolor{blue!20}32.6 \\
$\bullet$ SAR$_{\mathrm{ICLR23}}$ &
  11.8 &
  12.8 &
  11.9 &
  37.4 &
  38.2 &
  46.3 &
  43.1 &
  29.8 &
  33.0 &
  17.4 &
  56.0 &
  16.0 &
  43.5 &
  39.5 &
  38.2 &
  \cellcolor{blue!20}31.7 \\
$\bullet$ DeYO$_{\mathrm{ICLR24}}$ &
  11.0 &
  12.0 &
  11.1 &
  37.0 &
  37.7 &
  46.3 &
  43.2 &
  29.9 &
  33.3 &
  17.9 &
  56.2 &
  17.0 &
  43.7 &
  39.7 &
  38.0 &
  \cellcolor{blue!20}31.6 \\
$\bullet$ FOA$_{\mathrm{ICML24}}$ &
  18.7 &
  20.6 &
  19.3 &
  43.7 &
  45.5 &
  50.2 &
  47.9 &
  38.9 &
  43.7 &
  37.2 &
  \underline{60.5} &
  23.5 &
  52.7 &
  48.9 &
  47.4 &
  \cellcolor{blue!20}39.9 \\
$\bullet$ READ$_{\mathrm{ICLR24}}$ &
  14.5 &
  14.9 &
  14.8 &
  43.8 &
  42.1 &
  51.0 &
  46.5 &
  35.4 &
  38.9 &
  27.6 &
  58.9 &
  22.6 &
  47.1 &
  42.1 &
  38.1 &
  \cellcolor{blue!20}35.9 \\
$\bullet$ ABPEM$_{\mathrm{AAAI25}}$ &
  19.2 &
  20.7 &
  19.7 &
  46.2 &
  44.2 &
  51.9 &
  47.9 &
  38.1 &
  41.1 &
  32.6 &
  59.9 &
  25.3 &
  49.4 & 
  48.8 &
  45.6 &
  \cellcolor{blue!20}39.4 \\
$\bullet$ SuMi$_{\mathrm{ICLR25}}$ &
  12.5 &
  13.6 &
  12.6 &
  37.0 &
  37.9 &
  45.9 &
  42.3 &
  29.3 &
  32.7 &
  19.7 &
  55.7 &
  17.8 &
  42.7 &
  38.3 &
  36.9 &
  \cellcolor{blue!20}31.7 \\
$\bullet$ TSA$_{\mathrm{ICML25}}$ &
  18.1 &
  18.7 &
  17.9 &
  47.1 &
  \underline{47.7} &
  \underline{53.0} &
  \underline{50.1} &
  \underline{41.0} &
  \underline{45.1} &
  \underline{41.6} &
  59.6 &
  \underline{29.5} &
  \underline{53.1} &
  48.4 &
  45.2 &
  \cellcolor{blue!20}41.1 \\
$\bullet$ PTA$_{\mathrm{NeurIPS25}}$ &
  21.9 &
  23.0 &
  22.1 &
  \underline{47.7} &
  45.9 &
  52.6 &
  49.3 &
  40.3 &
  43.6 &
  39.7 &
  60.0 &
  27.3 &
  50.4 &
  50.9 &
  47.6 &
  \cellcolor{blue!20}\underline{41.5} \\
$\bullet$ BriMPR$_{\mathrm{AAAI26}}$ &
  \underline{22.9} &
  \underline{24.2} &
  \underline{24.1} &
  43.6 &
  45.4 &
  49.5 &
  48.2 &
  38.0 &
  40.8 &
  36.8 &
  59.8 &
  27.1 &
  52.8 &
  \underline{52.7} &
  \underline{47.9} &
  \cellcolor{blue!20}40.9 \\
\rowcolor{green!20} 
$\bullet$ FIRE &
  \bf{30.2} &
  \bf{30.8} &
  \bf{31.4} &
  \bf{48.7} &
  \bf{52.2} &
  \bf{54.2} &
  \bf{53.3} &
  \bf{44.5} &
  \bf{47.5} &
  \bf{45.7} &
  \bf{61.9} &
  \bf{35.1} &
  \bf{58.2} &
  \bf{58.5} &
  \bf{53.2} &
  \bf{47.0} \\ \bottomrule
\end{tabular}
\end{table}

\section{Experiments}
\subsection{Experimental Setup}
\noindent\textbf{Datasets. }We evaluate FIRE on the widely used multimodal datasets Kinetics50-C and VGGSound-C \cite{READ:conf/iclr/Yang0Z0024}, which are corrupted versions of Kinetics50 \cite{KS:journals/corr/KayCSZHVVGBNSZ17} and VGGSound \cite{vgg:conf/icassp/ChenXVZ20} datasets, respectively. Specifically, Kinetics50 contains YouTube videos of human actions across 50 categories, while VGGSound contains 309 audio-visual classes covering a wide range of real-world audio-visual events. Following \cite{READ:conf/iclr/Yang0Z0024}, we introduce 6 and 15 types of corruptions for the audio and video modalities, respectively, with each corruption type divided into 5 severity levels to simulate real-world multimodal distribution shifts. For multimodal corruptions, we use combinations of unimodal corruptions, resulting in 90 combinations for each benchmark. In particular, following \cite{READ:conf/iclr/Yang0Z0024,TSA:conf/icml/ChenZHJWF0B25,PTA:wang2026partition,BriMPR:conf/aaai/LiF26}, we select the highest corruption severity level. More details are provided in the appendix.

\begin{wraptable}{r}{0.55\textwidth}
\centering
\caption{Comparison with TTA baseline methods on VGGSound-C for multimodal domain shifts (severity level 5).}
\label{tab:vgg_both}
\setlength{\tabcolsep}{0.85mm}
\scriptsize
\begin{tabular}{lccccccc}
\toprule
& \multicolumn{3}{c}{Noise} & \multicolumn{3}{c}{Weather} & \\ \cmidrule(lr){2-4} \cmidrule(lr){5-7}
Method & Gauss. & Traff. & Crowd & Rain & Thund. & Wind & \cellcolor{blue!20}Avg. \\ \midrule
Source & 17.1 & 6.4 & 5.4 & 6.0 & 13.5 & 8.8 & \cellcolor{blue!20}9.5 \\
$\bullet$ Tent & 3.2 & 0.9 & 0.8 & 0.9 & 2.8 & 1.3 & \cellcolor{blue!20}1.6 \\
$\bullet$ T3A  & 20.6 & 9.2 & 7.8 & 8.6 & 17.4 & 10.9 & \cellcolor{blue!20}12.4 \\
$\bullet$ EATA & 21.5 & 7.7 & 7.1 & 7.3 & 17.3 & 11.9 & \cellcolor{blue!20}12.1 \\
$\bullet$ SAR & 10.7 & 1.8 & 1.6 & 2.3 & 12.8 & 3.1 & \cellcolor{blue!20}5.4 \\
$\bullet$ DeYO & 6.7 & 1.2 & 1.3 & 1.3 & 9.3 & 2.9 & \cellcolor{blue!20}3.8 \\
$\bullet$ FOA & 18.8 & 10.8 & 11.4 & 11.6 & 20.5 & 10.4 & \cellcolor{blue!20}13.9 \\
$\bullet$ READ & 20.1 & 12.5 & 10.7 & 10.5 & 20.5 & 13.4 & \cellcolor{blue!20}14.6 \\
$\bullet$ ABPEM & 21.9 & 13.4 & 12.3 & 10.9 & 20.4 & 12.4 & \cellcolor{blue!20}15.2 \\
$\bullet$ SuMi & 17.0 & 6.8 & 5.7 & 6.2 & 13.4 & 8.8 & \cellcolor{blue!20}9.7 \\
$\bullet$ TSA & 20.5 & 12.6 & 12.2 & 12.2 & 22.0 & 13.3 & \cellcolor{blue!20}15.5 \\
$\bullet$ PTA & \underline{24.3} & 15.3 & 15.3 & 13.7 & 23.8 & 15.0 & \cellcolor{blue!20}17.9 \\
$\bullet$ BriMPR & 23.5 & \underline{18.8} & \underline{21.4} & \underline{15.8} & \underline{26.8} & \underline{18.3} & \cellcolor{blue!20}\underline{20.7} \\
\rowcolor{green!20} 
$\bullet$ FIRE & \bf{25.8} & \bf{19.6} & \bf{22.4} & \bf{16.9} & \bf{27.5} & \bf{19.7} & \bf{22.0} \\ \bottomrule
\end{tabular}
\end{wraptable}

\noindent\textbf{Baselines. }We select two categories of methods for comparison. In detail, unimodal TTA methods: Tent \cite{Tent:conf/iclr/WangSLOD21}, T3A \cite{T3A:conf/nips/IwasawaM21}, EATA \cite{EATA:conf/icml/NiuW0CZZT22}, SAR \cite{SAR:conf/iclr/Niu00WCZT23}, DeYO \cite{DeYO:conf/iclr/0004JLPSHY24}, and FOA \cite{FOA:conf/icml/NiuMCWZ24}. Multimodal TTA methods: READ \cite{READ:conf/iclr/Yang0Z0024}, ABPEM \cite{ABPEM:conf/aaai/Zhao0LHY0025}, SuMi \cite{SuMi:conf/iclr/Guo025}, TSA \cite{TSA:conf/icml/ChenZHJWF0B25}, PTA \cite{PTA:wang2026partition}, and BriMPR \cite{BriMPR:conf/aaai/LiF26}.

\noindent\textbf{Implementation Details. }Following prior work \cite{READ:conf/iclr/Yang0Z0024}, we use the pretrained CAV-MAE \cite{CAVMAE:conf/iclr/GongRLHKKG23} as the source model. For all experiments, we set the learning rate to 1e-4 and employ the Adam \cite{Adam:journals/corr/KingmaB14} optimizer. The batch size is set to 16 for Kinetics50-C and 64 for VGGSound-C. The subspace $\hat{\U}$ is orthogonally reparameterized to preserve its orthogonality during TTA. The matrix $\W$ is initialized with the initial orthogonal basis $\hat{\U}$, while $\b$ is initialized to 0, yielding a no-op initialization to ensure stability. Unless otherwise specified, the rank of the low-rank matrices is set to 6. Notably, larger rank can yield better performance, while increasing the parameter count. For competing methods, we adopt the hyperparameters recommended in respective papers. All experiments are conducted on NVIDIA RTX-3090 GPUs with 5 random seeds. Following prior work \cite{READ:conf/iclr/Yang0Z0024}, we report the mean results over five random seeds.

\subsection{Main Results}
We report the results under unimodal and multimodal shift settings in Table \ref{tab:video}, Table \ref{tab:audio}, Table \ref{tab:ks50_both}, and Table \ref{tab:vgg_both}, where \textbf{bold} and \underline{underlined} denote the best and second-best results, respectively. All results are averaged over 5 random seeds.

\noindent\textbf{Robustness to unimodal shift setting. }In Table \ref{tab:video} and Table \ref{tab:audio}, we evaluate the unimodal shift setting on Kinetics50-C and VGGSound-C datasets, where the audio and video modalities are corrupted, respectively. We can observe that: (1) Unimodal TTA methods exhibit limited effectiveness in the multimodal setting and may even suffer from negative transfer, whereas multimodal TTA methods yield consistent improvements. (2) Our approach outperforms SOTA baselines, achieving the best average performance across datasets.

\noindent\textbf{Robustness to multimodal shift setting. }Table \ref{tab:ks50_both} and Table \ref{tab:vgg_both} present results under the more challenging multimodal shift setting, where online data from both audio and video modalities is corrupted. Each entry in the table denotes the average performance over all corruptions of the other modality under a specific corruption of one modality. We can draw conclusions: (1) Most TTA methods yield limited gains and may even suffer from negative transfer. (2) Our approach achieves substantial improvements in this more complex setting.

\subsection{Further Analysis}
\noindent\textbf{Impact of Rank. }In the representation editor, the number of ranks directly determines the number of trainable parameters and editing directions. To further investigate the impact of rank on TTA performance, we conduct a comprehensive study of the editor rank on Kinetics50-C dataset. Specifically, we consider different editor ranks, ranging from 2 to 10. The results are shown in Table \ref{tab:rank_depth}. We can observe that TTA performance improves as the rank increases. Nevertheless, thanks to the parameter efficiency, even a small rank can achieve competitiveness, relieving us from the need to exhaustively search for the optimal rank.

\begin{table}[t]
\centering
\begin{tabular}{cc}
\begin{minipage}{.48\textwidth}
\centering
\caption{Impact of rank (left) and editing depth (right) on Kinetics50-C dataset.}
\label{tab:rank_depth}
\setlength{\tabcolsep}{0.85mm}
\scriptsize
\begin{tabular}{lccc|lccc}
\toprule
\multicolumn{1}{c}{Rank} & audio & video & both & \multicolumn{1}{c}{Depth} & audio & video & both \\ \midrule
2 & 72.8 & 67.3 & 44.5 &  First Layer & 71.3 & 63.3 & 36.9 \\
4 & 73.3 & 68.2 & 46.3  & First Half & 72.7 & 67.7 & 45.2  \\ 
\cellcolor{green!20}6 & \cellcolor{green!20}73.4 & \cellcolor{green!20}68.6 & \cellcolor{green!20}47.0 & Latter Half & 72.7 & 65.7 & 42.2 \\
8 & \bf 73.5 & 69.0 & 47.5 & Last Layer & 71.2 & 63.3 & 36.5  \\
10 & \bf 73.5 & \bf 69.2 & \bf 48.0 & \cellcolor{green!20}All Layers & \cellcolor{green!20}\bf 73.4 & \cellcolor{green!20}\bf 68.6 & \cellcolor{green!20}\bf 47.0 \\ \bottomrule
\end{tabular}
\end{minipage}
\hfill
\begin{minipage}{.48\textwidth}
\caption{The number of trainable parameters and performance on Kinetics50-C dataset.}
\label{tab:param}
\centering
\setlength{\tabcolsep}{0.85mm}
\scriptsize
\begin{tabular}{llcccc}
\toprule 
\multicolumn{1}{c}{Method}& \multicolumn{1}{c}{layer}& audio & video & both & \multicolumn{1}{c}{Params (M)} \\ \midrule
Tent & Normal & 70.2 & 60.1 & 28.1 & 0.218 \\
READ & Attention & 71.5 & 63.4 & 35.9 & 1.772 \\
TSA & Adapter & 71.5 & 64.5 & 41.1 & 1.180 \\
BriMPR & Prompt & 72.0 & 65.9 & 40.9 & 0.169 \\
\rowcolor{green!20}
FIRE (Rank2) & Editor & \bf 72.8 & \bf 67.3  & \bf 44.5 & \bf 0.068 \\
\bottomrule
\end{tabular}
\end{minipage} 
\end{tabular}
\end{table}

\noindent\textbf{Impact of editing depth. }We examine the influence of editing depth for representation editors on multimodal TTA performance under 5 different settings: the first layer, the last layer, the first half, the latter half, and all layers. The results are reported in Table \ref{tab:rank_depth}. We find that, with the same number of edited layers, shallow intervention performs better than deep, which is consistent with the intuition that representations should be calibrated as early as possible. In addition, editing multiple layers outperforms editing a single layer, and all layers achieve the best performance.

\begin{wraptable}{r}{0.45\textwidth}
\centering
\caption{Ablation study for different components of FIRE.}
\label{tab:ablation}
\setlength{\tabcolsep}{0.85mm}
\scriptsize
\begin{tabular}{lcccccc}
\toprule
& \multicolumn{3}{c}{Kinetics50-C} & \multicolumn{3}{c}{VGGSound-C} \\ \cmidrule(l){2-4} \cmidrule(l){5-7}
\multicolumn{1}{c}{Method}& audio & video & both & audio & video & both \\ \midrule
Source & 69.4 & 60.5 & 31.8 & 25.0 & 56.2 & 9.5\\
w/o FFT & 72.8 & 67.5 & 46.1 & 36.7 & 58.0 & 21.6\\
w/o $\LM_{cl}$ & 72.9 & 67.9 & 46.8 & 36.6 & 58.0 & 21.7 \\
w/o $\LM_{stat}$ & 71.4  & 63.6  & 36.6 & 33.1 & 56.9 & 15.4 \\
w/o $\LM_{pc}$ & 72.4 & 67.0 & 44.1 & 35.7 & 57.6 & 21.0\\
w/o asym. & 72.5 & 67.2 & 44.5 & 36.5 & 57.8 & 21.5\\
w/ asym. (opposite) & 71.4 & 66.9 & 44.0 & 34.8 & 56.7 & 20.8\\
\rowcolor{green!20}
FIRE & \bf 73.4 & \bf 68.6 & \bf 47.0 & \bf 36.9 & \bf 58.2 & \bf 22.0 \\ \bottomrule
\end{tabular}
\end{wraptable}

\noindent\textbf{Number of trainable parameters. }We evaluate the number of learnable parameters of different TTA methods, including normalization layers (Tent), attention (READ), adapters (TSA), prompts (BriMPR), and our proposed editors. As shown in Table \ref{tab:param}, since parameters in our approach mainly come from low-rank matrices, it remains parameter-efficient, even when editors are inserted in a layer-wise manner. Even with a rank as low as 2, FIRE still achieves the best performance on Kinetics50-C dataset.

\noindent\textbf{Ablation of FIRE. }We analyze the contributions of individual components in FIRE, including the FFT, each self-training loss, and the asymmetric design for the prediction consistency loss. For the prediction consistency loss, we replace the original asymmetric weighting with either average weighting, w/o asym., or reversed asymmetric weighting, w/ asym. (opposite). The results are reported in Table \ref{tab:ablation}. From Table \ref{tab:ablation}, the representation editors effectively mitigate distribution shifts, and FFT along the hidden dimension further improves performance. Among the self-training objectives, the source-target statistical alignment loss $\LM_{stat}$ yields the largest performance gain, supporting our motivation for layer-wise representation calibration. Moreover, both average weighting and the reversed asymmetric setting degrade performance, validating the effectiveness of the asymmetric strategy.

\section{Conclusion}
In this work, we propose FourIer Representation Editor (FIRE), a novel multimodal TTA approach that addresses distribution shifts by directly editing intermediate representations. Motivated by the observation that representation shifts emerge in shallow layers and increase with depth, FIRE performs layer-wise representation editing. Specifically, we insert an editor after each encoder transformer layer while keeping the pretrained model frozen. Subsequently, we introduce FFT to enrich the editing subspace and improve adaptation stability. Furthermore, we optimize these editors with multi-level self-training objectives to promote cross-modal semantic alignment and source-target statistical alignment. Extensive experiments demonstrate the superiority of FIRE.

\noindent\textbf{Limitations: }While FIRE performs representation editing across all intermediate layers, it assigns equal importance to each layer. Exploring adaptive layer selection may improve flexibility. We leave a more comprehensive investigation as future work.

\bibliography{iclr2027_conference}
\bibliographystyle{iclr2027_conference}

\clearpage
\appendix
\renewcommand{\thefigure}{\Roman{figure}}
\renewcommand{\thetable}{\Roman{table}}

\setcounter{figure}{0}
\setcounter{table}{0}
\setcounter{theorem}{0} 
\setcounter{lemma}{0}

\section{Analysis of Representation Editing}
We analyze the stability and effectiveness of the proposed method through the following two theorems.

\noindent{\textbf{Notation.}} The editing subspace $\hat{\U} \in \RB^{d_m\times d_n}$ satisfies $\hat{\U} \hat{\U}^{\top}=\I_{d_m}$. Let orthogonal projection $\P=\hat{\U}^{\top} \hat{\U}$ and define subspace $\SM=\operatorname{range}(\hat{\U}^{\top})$. Define $\h'=\W \h + \b$.

\begin{theorem}[Stability]
Assuming that the frozen downstream network $g$ is $L$-Lipschitz. For every $\h,\h_1,\h_2\in\RB^{d_n}$, then $\|\psi(\h_1)-\psi(\h_2)\|_2 \le (1+\|\W-\hat{\U}\|_2) \|\h_1-\h_2\|_2$ and $\|g(\psi(\h))-g(\h)\|_2 \le L(\|\W-\hat{\U}\|_2 \|\h\|_2+\|\b\|_2)$.
\end{theorem}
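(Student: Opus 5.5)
Both bounds follow from rewriting the editor as an affine map whose deviation from the identity is controlled by $\W-\hat{\U}$. Using the notation above, we have
\[
\psi(\h)=\h+\hat{\U}^{\top}\bigl((\W-\hat{\U})\h+\b\bigr),
\]
so $\psi(\h)-\h=\hat{\U}^{\top}\bigl((\W-\hat{\U})\h+\b\bigr)$. The key fact is $\|\hat{\U}^{\top}\|_2=1$ (or $\le 1$), which holds because $\hat{\U}\hat{\U}^{\top}=\I_{d_m}$. Hence $\hat{\U}^{\top}$ is an isometry from $\RB^{d_m}$ onto $\SM$: for any $\z$, $\|\hat{\U}^{\top}\z\|_2^2=\z^{\top}\hat{\U}\hat{\U}^{\top}\z=\|\z\|_2^2$.

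For the first (Lipschitz) bound, subtract the two evaluations. The bias $\b$ cancels, leaving
\[
\psi(\h_1)-\psi(\h_2)=(\h_1-\h_2)+\hat{\U}^{\top}(\W-\hat{\U})(\h_1-\h_2).
\]
By the triangle inequality, submultiplicativity of the spectral norm, and $\|\hat{\U}^{\top}\|_2=1$, we get $\|\psi(\h_1)-\psi(\h_2)\|_2\le (1+\|\W-\hat{\U}\|_2)\|\h_1-\h_2\|_2$. Equivalently, $\psi$ is affine with linear part $\I+\hat{\U}^{\top}(\W-\hat{\U})$, whose operator norm is at most $1+\|\W-\hat{\U}\|_2$.

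For the second bound, apply the $L$-Lipschitz property of $g$ to the pair $(\psi(\h),\h)$, which gives $\|g(\psi(\h))-g(\h)\|_2\le L\|\psi(\h)-\h\|_2$. Then use the isometry, the triangle inequality, and the operator norm:
\[
\|\psi(\h)-\h\|_2=\|(\W-\hat{\U})\h+\b\|_2\le\|\W-\hat{\U}\|_2\|\h\|_2+\|\b\|_2 .
\]
Combining the two displays yields the claim.

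No step is a real obstacle. The one point needing care is that $\W-\hat{\U}$ is a $d_m\times d_n$ matrix, so its spectral norm is understood as the operator norm from $\RB^{d_n}$ to $\RB^{d_m}$, and the bound $\|\hat{\U}^{\top}\|_2\le1$ must come from row-orthonormality and not from squareness. The projection $\P$ is not needed, although one can note that $\psi(\h)-\h\in\SM$. Finally, under the no-op initialization $\W=\hat{\U}$, $\b=0$, both bounds collapse, to $\psi$ being $1$-Lipschitz and to $g(\psi(\h))=g(\h)$, which is the stability interpretation.
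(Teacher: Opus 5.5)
Your proposal is correct and follows the paper's proof essentially step for step. Like the paper, you cancel the bias to obtain $\psi(\h_1)-\psi(\h_2)=(\h_1-\h_2)+\hat{\U}^{\top}(\W-\hat{\U})(\h_1-\h_2)$, combine $\|\hat{\U}^{\top}\|_2=1$ with the triangle inequality, and bound $\|g(\psi(\h))-g(\h)\|_2$ by $L\|(\W-\hat{\U})\h+\b\|_2$. Your observation that $\hat{\U}^{\top}$ is an exact isometry gives equality in $\|\psi(\h)-\h\|_2=\|(\W-\hat{\U})\h+\b\|_2$ where the paper writes an inequality; this is a minor sharpening and does not change the argument.
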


\begin{proof} 
Define $\delta=\h_1-\h_2$. Since the bias terms cancel out in the difference, we can obtain:
\begin{equation}
\begin{aligned}
\psi(\h_1)-\psi(\h_2)=\delta+\hat{\U}^{\top}(\W-\hat{\U})\delta.
\end{aligned}
\end{equation}
By $\|\hat{\U}^{\top}\|_2=1$ and the triangle inequality, we have:
\begin{equation}
\begin{aligned}
\|\psi(\h_1)-\psi(\h_2)\|_2 \le (1+\|\W-\hat{\U}\|_2) \|\h_1-\h_2\|_2.
\end{aligned}
\end{equation}

Since the downstream network $g$ is $L$-Lipschitz, we have:
\begin{equation}
\begin{aligned}
\|g(\psi(\h))-g(\h)\|_2 &\le L(\|\psi(\h)-\h \|_2) \\
&\le L(\|\h'-\hat{\U}\h \|_2) \\
&\le L(\|(\W-\hat{\U})\h+\b \|_2) \\
&\le L(\|\W-\hat{\U} \|_2 \|\h \|_2+\|\b \|_2).
\end{aligned}
\end{equation}
Near the no-op initialization, both $\W-\hat{\U}$ and $\|\b \|_2$ remain small, yielding a quantifiable upper bound on the perturbation to the frozen model.

This completes the proof.
\end{proof}

\begin{theorem}[Wasserstein interpretation and bound]
Assume that the source and target representations follow diagonal Gaussian distributions. Then, the representation statistical alignment loss and the 2-Wasserstein distance between the corresponding source and target distributions differ by at most a constant factor $\sqrt{2}$.
\end{theorem}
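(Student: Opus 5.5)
The argument reduces everything to one layer and one modality, and then combines the per-layer bounds by summation.

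\textbf{Step 1: closed form for a single layer.} Fix a layer $l$ and a modality $r$, and write the source and target representation distributions as
\[
\NM(\mu^{\SM},\operatorname{diag}((\sigma^{\SM})^2)), \qquad \NM(\mu^{\TM},\operatorname{diag}((\sigma^{\TM})^2)).
\]
The 2-Wasserstein distance between Gaussians has a closed form. The covariances here are diagonal, so they commute, and the Bures term reduces to $\|\sigma^{\TM}-\sigma^{\SM}\|_2^2$. This gives
\[
W_2^2=\|\mu^{\TM}-\mu^{\SM}\|_2^2+\|\sigma^{\TM}-\sigma^{\SM}\|_2^2 .
\]
One can cite this formula, or check it directly: the optimal coupling is the coordinate-wise monotone map $x\mapsto \mu^{\SM}+(\sigma^{\SM}/\sigma^{\TM})\odot(x-\mu^{\TM})$.

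\textbf{Step 2: compare with the loss term.} Set $a=\|\mu_l^{\TM,r}-\mu_l^{\SM,r}\|_2$ and $b=\|\sigma_l^{\TM,r}-\sigma_l^{\SM,r}\|_2$. The per-layer loss term is $a+b$, and Step 1 gives $W_{2,l}^r=\sqrt{a^2+b^2}$. The elementary inequality
\[
\sqrt{a^2+b^2}\le a+b\le \sqrt{2}\sqrt{a^2+b^2}
\]
holds: the left side follows from $2ab\ge 0$, and the right side is Cauchy--Schwarz (or $(a-b)^2\ge0$). So for each layer and modality,
\[
W_{2,l}^r\le \text{(loss term)}_l^r\le \sqrt2\, W_{2,l}^r .
\]

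\textbf{Step 3: aggregate.} All terms are nonnegative, and the loss is the average over layers and sum over modalities in Eq.~\eqref{loss:stat}. Applying the same averaging and summation to the $W_2$ distances preserves both inequalities, which yields
\[
\bar W\le \LM_{stat}\le \sqrt{2}\,\bar W, \qquad \bar W=\sum_r \tfrac1n\sum_l W_{2,l}^r .
\]
This is the claimed equivalence up to the factor $\sqrt2$.

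\textbf{Main obstacle.} The statement is ambiguous about which ``corresponding distributions'' are meant. If it means the joint distribution over all layers, we would additionally need independence across layers or modalities, and then the bound would relate to $\sqrt{\sum W^2}$ rather than $\sum W$. I would fix the interpretation explicitly as per-layer, per-modality marginals. A second, smaller point is that $\sigma$ must be read as the vector of coordinate-wise standard deviations, so that the Bures term really equals $\|\sigma^{\TM}-\sigma^{\SM}\|_2^2$. Finally, the empirical batch statistics are treated as the population parameters; I would state this as part of the Gaussian assumption rather than prove any concentration result.
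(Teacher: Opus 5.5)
Your proposal is correct and follows the paper's proof essentially step for step. Both fix one layer and one modality, use the diagonal-Gaussian closed form $W_2^2=\|\mu^{\mathcal{T}}-\mu^{\mathcal{S}}\|_2^2+\|\sigma^{\mathcal{T}}-\sigma^{\mathcal{S}}\|_2^2$, apply the sandwich $\sqrt{a^2+b^2}\le a+b\le\sqrt{2}\sqrt{a^2+b^2}$, and then aggregate over layers and modalities; your explicit definition $\bar W=\sum_r \frac{1}{n}\sum_l W_{2,l}^r$ is exactly what the paper's ``averaging preserves the inequality'' step implicitly assumes.
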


\begin{proof}
To provide a probabilistic interpretation of the representation statistics alignment objective, we adopt a moment-matched diagonal Gaussian approximation for the layer-wise source and target representations. Importantly, the Gaussian assumption is used only for theoretical interpretation and is not required by the FIRE optimization procedure.

Fix a layer $l$ and modality $r$. Approximate the source and target representations using the following diagonal Gaussian distributions:
\begin{equation}
\begin{aligned}
\NM(\mu^{\SM},\textstyle\sum_{\SM})), \NM(\mu^{\TM},\textstyle\sum_{\TM}),
\end{aligned}
\end{equation}
where $\textstyle\sum_{\SM}=\operatorname{diag}((\sigma^{\SM})^2)$, $\textstyle\sum_{\TM}=\operatorname{diag}((\sigma^{\TM})^2)$ and the standard deviation vectors are element-wise nonnegative. The 2-Wasserstein distance $\WM_{2}$ between the Gaussian distributions is given by:
\begin{equation}
\begin{aligned}
\WM_{2}^{2}&=\|\mu^{\TM}-\mu^{\SM} \|_{2}^{2}+\operatorname{tr}(\textstyle\sum_{\SM}+\textstyle\sum_{\TM}-2(\textstyle\sum_{\TM}^{\frac{1}{2}} \textstyle\sum_{\SM} \textstyle\sum_{\TM}^{\frac{1}{2}})^{\frac{1}{2}}) \\
&=\|\mu^{\TM}-\mu^{\SM} \|_{2}^{2} + \|\sigma^{\TM}-\sigma^{\SM} \|_{2}^{2}.
\end{aligned}
\end{equation}

Let $a=\|\mu^{\TM}-\mu^{\SM} \|_2$ and $b=\|\sigma^{\TM}-\sigma^{\SM} \|_2$. Then $\WM_{2}=\sqrt{a^2+b^2}$ and $D_{stat}=a+b$. Since $a\geq0$ and $b\geq0$, we have $\sqrt{a^2+b^2} \leq a+b$. By the Cauchy–Schwarz inequality,
\begin{equation}
\begin{aligned}
a+b \leq \sqrt{2}\sqrt{a^2+b^2},
\end{aligned}
\end{equation}
we can obtain:
\begin{equation}
\begin{aligned}
\WM_{2} \leq D_{stat} \leq \sqrt{2}\WM_{2}.
\end{aligned}
\end{equation}
Averaging over layers and modalities preserves the direction of the inequality. Therefore, the representation statistical alignment loss and the 2-Wasserstein distance between the corresponding source and target distributions differ by at most a constant factor $\sqrt{2}$. The editors can control the optimal transport distance between the source and target domains by minimizing the loss in Equation \ref{loss:stat}.

This completes the proof.
\end{proof}

\section{Algorithm of FIRE}
We summarize the overall optimization procedure of our method in Algorithm \ref{alg:FIRE}.

\begin{algorithm}[t]
\caption{Optimization algorithm of our approach}
\label{alg:FIRE}
\textbf{Input:} The pretrained model
$F_{\Theta}=\{f_{\Theta^a},f_{\Theta^v},f_{\Theta^m},f_{\Theta^c}\}$, representation editors $\{\psi_{a}^{l},\psi_{v}^{l}\}_{l=1}^{n}$, and the unlabeled target dataset $\DM_{\TM}=\{ (\x^{a}_j, \x^{v}_j) \}^{N_{\TM}}_{j=1}$.\\
\textbf{Output:} The adapted editors and the prediction.

\begin{algorithmic}[1]
\FOR{each mini-batch in $\DM_{\TM}$}
\STATE $\hat{\h}_{a}^{0} = \operatorname{Embed}(\x^a)$, $\hat{\h}_{v}^{0} = \operatorname{Embed}(\x^v)$;
\FOR{$l=1,\ldots,n$}
\STATE // Extract representations
\STATE $\h_{a}^{l}=f_{\Theta^a}^{l}(\hat{\h}_{a}^{l-1})$ and $\h_{v}^{l}=f_{\Theta^v}^{l}(\hat{\h}_{v}^{l-1})$; \\
\STATE // Edit representations
\STATE $\hat{\h}_{a}^{l}=\psi_{a}^{l}(\h_{a}^{l})$ and $\hat{\h}_{v}^{l}=\psi_{v}^{l}(\h_{v}^{l})$. \\
\STATE Compute statistics $(\mu^{\TM,a}_{l}, \sigma^{\TM,a}_{l}, \mu^{\TM,v}_{l}, \sigma^{\TM,v}_{l})$;
\ENDFOR
\STATE Obtain the prediction $\p=f_{\Theta^c}(f_{\Theta^m}(\hat{\h}_{a}^{n},\hat{\h}_{v}^{n}))$.\\
\STATE Construct modality-masked inputs and obtain predictions $\p^a$ and $\p^v$.\\
\STATE Calculate the final loss in Equation \ref{loss:final}.\\
\STATE Update the editors using back-propagation.
\ENDFOR
\end{algorithmic}
\end{algorithm}

\section{Notation Definition}
We summarize the notation definitions we used in this paper in Table \ref{tab:notation}.

\begin{table}[t]
\centering
\caption{Notation Definition}
\label{tab:notation}
\begin{tabular}{l|l}
\hline
\textbf{Notation} & \textbf{Description} \\
\hline
$N_{\SM}$              & The number of source data. \\
$N_{\TM}$              & The number of target data. \\
$y_i$                    & Category label of $i$-th data. \\
$\x_i^a / \x_i^v$      & Audio / Video data point. \\
$\DM_{\SM} / \DM_{\TM}$          & Source / Target domain. \\
$r$        & $r$-modality. \\
$F_{\Theta}$               & Pretrained model. \\
$f_{\Theta^a}/ f_{\Theta^v}$ & Audio / Video encoder. \\
$f_{\Theta^m}$ & Fusion layer. \\
$f_{\Theta^c}$ & Classifier head. \\
$\Theta$          & Model parameters. \\
$P_{\SM} / P_{\TM}$   & Source / Target distribution. \\
$\Psi(\cdot)$   & Representation editor. \\
$\psi(\cdot)$   & Fourier representation editor. \\
$\U / \W / \b$   & Editor parameters. \\
$\FM(\cdot)$   & Discrete Fourier Transform. \\
$\mathfrak{R}$   & Real component. \\
$\hat{\U}$   & Fourier-enhanced subspace. \\
$\operatorname{qf}(\cdot)$ & Orthogonal factor $Q$ in QR decomposition. \\
$B$ & Batch size. \\
$sim(\cdot,\cdot)$ & Cosine similarity. \\
$\LM_{cl}$ & Cross-modal semantic alignment loss. \\
$\tau$ & Temperature hyperparameter in $\LM_{cl}$. \\
$\mu / \sigma$   & Representation mean / standard deviation. \\
$n$   & The number of encoder layers. \\
$\LM_{stat}$    & Source-target statistical alignment loss. \\
$\p$    & Prediction with complete modalities. \\
$\p^{r}$    & Prediction after masking modality $r$. \\
$\ell(\cdot)$    & Cross-entropy loss. \\
$\LM_{pc}$    & Asymmetric prediction consistency loss. \\
\bottomrule
\end{tabular}
\end{table}

\section{Experiments Details}
\subsection{Dataset Details}
\noindent{\textbf{Kinetics50:}} The Kinetics \cite{KS:journals/corr/KayCSZHVVGBNSZ17} dataset is a large-scale and high-quality dataset for human action recognition in videos. The dataset consists of around 300,000 video clips covering 400 human action classes with at least 400 video clips for each action class. Each video clip lasts around 10 seconds and is labeled with a single action class. Following \cite{READ:conf/iclr/Yang0Z0024}, we use a subset of Kinetics which consists of 50 classes, 29,204 training pairs, and 2,466 test pairs.

\noindent{\textbf{VGGSound:}} VGGSound \cite{vgg:conf/icassp/ChenXVZ20} is a large-scale audio-visual correspondence dataset consisting of short clips of audio sounds, extracted from YouTube videos. All videos are captured in the wild and exhibit audio-visual correspondence, with the sound sources being visually apparent. Each video in this dataset has a fixed duration of 10 seconds.

\noindent{\textbf{Kinetics50-C and VGGSound-C:}} Following previous work \cite{READ:conf/iclr/Yang0Z0024}, we introduce 15 types of video corruptions and 6 types of audio corruptions. Video corruptions include “Gaussian Noise” (Gauss.), “Shot Noise” (Shot), “Impulse Noise” (Impul.), “Defocus Blur” (Defoc.), “Glass Blur” (Glass), “Motion Blur” (Motion), “Zoom Blur” (Zoom), “Snow” (Snow), “Frost” (Frost), “Fog” (Fog), “Brightness” (Bright.), “Contrast” (Contr.), “Elastic” (Elastic), “Pixelate” (Pixel) and “JPEG” (JPEG). Audio corruptions include “Gaussian Noise” (Gauss.), “Paris Traffic Noise” (Traff.), “Crowd Noise” (Crowd), “Rainy Noise” (Rain), “Thunder Noise” (Thund.) and “Windy Noise” (Wind).

\subsection{Baseline Details}
\noindent{\textbf{Tent:}} Tent \cite{Tent:conf/iclr/WangSLOD21} optimizes the affine parameters in the normalization layer by minimizing the entropy of the model’s predictions. In the transformer-based CAV-MAE, we replace the Batch Normalization (BN) layers in the original implementation with Layer Normalization (LN) layers.

\noindent{\textbf{T3A:}} T3A \cite{T3A:conf/nips/IwasawaM21} replaces the output layer of the predictor trained on the source domain with a pseudo-prototypical classifier and adjusts the prototype features during test time.

\noindent{\textbf{EATA:}} EATA \cite{EATA:conf/icml/NiuW0CZZT22} selects test samples with low entropy to participate in entropy minimization, and introduces a weighted Fisher regularizer. The entropy threshold $E_0$ is set to $0.4\times\ln C$ (where $C$ is the number of classes). The cosine similarity threshold $\epsilon$ used for filtering redundant samples is set to 0.1. The trade-off hyperparameter $\beta$ is set to 1. The Fisher information is calculated using 2,000 unlabeled in-distribution samples, and the moving average factor $\alpha$ is set to 0.1.

\noindent{\textbf{SAR:}} SAR \cite{SAR:conf/iclr/Niu00WCZT23} introduces sharpness-aware learning and minimizes entropy. The entropy threshold $E_0$ is set to $0.4\times\ln C$. The radius $\rho$ in the sharpness-aware optimization is set to 0.05. The model recovery threshold $e_0$ is set to 0.2. The moving average factor used to track the loss value is set to 0.9.

\noindent{\textbf{DeYO:}} DeYO \cite{DeYO:conf/iclr/0004JLPSHY24} introduces the Pseudo-Label Probability Difference (PLPD) metric to identify harmful samples that cannot be detected by entropy. The entropy threshold $\tau_{Ent}$ is set to $0.5\times\ln C$, the PLPD threshold $\tau_{PLPD}$ is set to $0.2\times\ln C$, and the normalization factor $Ent_0$ in the weighting function is set to $0.4\times\ln C$. The default patch-shuffling is used as the image transformation. 

\noindent{\textbf{FOA:}} FOA \cite{FOA:conf/icml/NiuMCWZ24} inserts prompts at the input level and employs the derivative-free covariance matrix adaptation (CMA) evolution strategy. The number of prompt embeddings $N_p$ is set to 1 with the default uniform initialization. The population size $K$ in the CMA evolution strategy is set to $27 = 4 + 3 \ln(2 \times 768)$. The trade-off parameter $\lambda$ in the fitness function is set to $0.4$. The step size $\gamma$ in the back-to-source activation shifting is set to 1.0. The moving average factor for computing the test statistics is set to 0.1.

\noindent{\textbf{READ:}} READ \cite{READ:conf/iclr/Yang0Z0024} updates the self-attention layer of the fusion module by optimizing the confidence-aware loss. The confidence threshold $\gamma$ is set to $e^{-1}$.

\noindent{\textbf{ABPEM:}} ABPEM \cite{ABPEM:conf/aaai/Zhao0LHY0025} aligns cross-attention to self-attention to reduce inter-modal differences, while excluding non-dominant class samples to reduce gradient noise in entropy loss.
The threshold $k$ for class ranking is set to 8/30 for Kinetics50-C and VGGSound-C. The weight $\lambda$ for the attention bootstrapping loss is set to 1.

\noindent{\textbf{SuMi:}} SuMi \cite{SuMi:conf/iclr/Guo025} selects high-quality samples via unimodal-assisted identification, and balances adaptation across modalities by leveraging mutual information sharing. The multimodal threshold $\gamma_m$ and the normalization factor $\text{Ent}_0$ are set to $0.4\times \ln C$. The unimodal threshold $\gamma_u$ is set to $e^{-1}$. The smoothing coefficient $\beta$ is set to 0.6/0.9, the weighting term $\lambda$ is set to 5.0, and the unimodal assistance $t$ is set to 1.0 by default for Kinetics50-C and VGGSound-C. For the multimodal shift setting and real-world shift setting, we set the mutual information sharing term $t_{0}$ as $iter/2$.

\noindent{\textbf{TSA:}} TSA \cite{TSA:conf/icml/ChenZHJWF0B25} introduces modality-specific adapters and a router module to select the modality to adapt. The coefficient of the self-training loss is set to 0.5, and the softmax temperature is set to 0.001.

\noindent{\textbf{PTA:}} PTA \cite{PTA:wang2026partition} consists of two key components: Partition and Debiased Reweighting (PDR) and multimodal Attention-Guided Alignment (AGA). The weight $s$ assigned to samples with large prediction bias is set to 0.5.

\noindent{\textbf{BriMPR:}} BriMPR \cite{BriMPR:conf/aaai/LiF26} employs prompt tuning to align the global feature distribution of each modality with its corresponding source distribution. The default number of prompts per layer is set to 10. $\tau_0$ and $D_0$ are set to 0.2 and 5, respectively. $\tau$ is set to 0.07/0.25 for the unimodal and multimodal corruption settings, respectively. 

\end{document}